\newcounter{appcount}
\newcommand{\appendicesname}
            {Appendix\ \thechapter  \Alph{appcount}}
\newcommand{\bookappendicesname}
            {Appendix\ \Alph{appcount}}
\newcommand{\chapterappendix}[1]
          {\par\setcounter{section}{0}
           \setcounter{equation}{0}
           \setcounter{table}{0}
           \setcounter{figure}{0}
          \addtocounter{appcount}{1}   \renewcommand{\theequation}{\thechapter\Alph{appcount}.\arabic{equation}}
          \renewcommand{\thetable}{\thechapter\Alph{appcount}.\arabic{table}}
          \renewcommand{\thefigure}{\thechapter\Alph{appcount}.\arabic{figure}}
           \setcounter{section}{\arabic{chapter}\Alph{section}}
           \if@openright\cleardoublepage\else\clearpage\fi
           \chapter*{\huge{\appendicesname}\newline\newline \Huge{#1}}
           \addcontentsline{toc}{section}{\thechapter\Alph{appcount} #1}
           \markright{\MakeUppercase{\appendicesname.\ { #1}}}}
\newcommand{\bookappendix}[1]
          {\par\setcounter{section}{0}
           \setcounter{equation}{0}
           \setcounter{table}{0}
           \setcounter{figure}{0}
          \addtocounter{appcount}{1}   \renewcommand{\theequation}{\Alph{appcount}.\arabic{equation}}
           \renewcommand{\thetable}{\Alph{appcount}.\arabic{table}}
             \renewcommand{\thefigure}{\Alph{appcount}.\arabic{figure}}
           \setcounter{section}{\arabic{chapter}\Alph{section}}
           \if@openright\cleardoublepage\else\clearpage\fi
           \chapter*{\huge{\bookappendicesname}\newline\newline \Huge{#1}}
           \addcontentsline{toc}{chapter}{\bookappendicesname #1}
          \markright{\MakeUppercase{\bookappendicesname.\ { #1}}} }
\newcounter{example}
\newcounter{property}
\newcommand{\ben}{\begin{equation}}
\newcommand{\een}{\end{equation}}
\newcommand{\bea}{\begin{eqnarray*}}
\newcommand{\eea}{\end{eqnarray*}}
\newcommand{\bean}{\begin{eqnarray}}
\newcommand{\eean}{\end{eqnarray}}
\newcommand{\removelatexerror}{\let\@latex@error\@gobble}
\begin{document}

\begin{frontmatter}
\title{A Local Density-Based Approach for Local Outlier Detection}

\author{Bo Tang and Haibo He}

\address{Bo Tang and Haibo He are with the Department of Electrical, Computer, and Biomedical Engineering, University of Rhode Island, Kingston, RI, 02881 USA, Email: \textit{btang, he}@ele.uri.edu. \\
}

\begin{abstract}
This paper presents a simple but effective density-based outlier detection approach with the local kernel density estimation (KDE). A \textit{Relative Density-based Outlier Score} (RDOS) is introduced to measure the local outlierness of objects, in which the density distribution at the location of an object is estimated with a local KDE method based on extended nearest neighbors of the object. Instead of using only $k$ nearest neighbors, we further consider reverse nearest neighbors and shared nearest neighbors of an object for density distribution estimation. Some theoretical properties of the proposed RDOS including its expected value and false alarm probability are derived. A comprehensive experimental study on both synthetic and real-life data sets demonstrates that our approach is more effective than state-of-the-art outlier detection methods.
\end{abstract}

\end{frontmatter}

%

%
%

%
%



\section{Introduction}
Advances in data acquisition have created massive collections of data, capturing valuable information to science, government, business, and society. However, despite of the availability of large amount of data, some events are rare or exceptional, which are usually called ``outliers" or ``anomalies". Compared with many other knowledge discovery problems, outlier detection is sometimes more valuable in many applications, such as network intrusion detection, fraudulent transactions, and medical diagnostics. For example, in network intrusion detection, the number of intrusions or attacks (``bad" connections) is much less than the ``good" and normal connections. Similarly, the abnormal behaviors are usually rare in many other cases. Although these outliers are only a small portion of the whole data set, it is much more costly to misunderstand them compared with other events. 

In recent decades, many outlier detection approaches have been proposed. Usually an outlier detection method can be categorized into the following four types of method \cite{jin2001mining}\cite{1334558}: distribution-based, distance-based, clustering-based, and density-based. In distribution-based methods, an object is considered as the outlier if it deviates from a standard distribution (e.g., normal, Poisson, etc.) too much \cite{barnett1994outliers}. The problem of the distribution-based method is that the underlying distribution is usually unknown and does not follow a standard distribution for many practical applications. 

The distance-based methods detect outliers by computing distances among all objects. An object is considered as the outlier when it has $d_0$ distance away from $p_0$ percentage of objects in the data set \cite{knox1998algorithms}. In \cite{aggarwal2001outlier}, the distance among objects is calculated in feature subspace through projections for high dimensional data sets. The problem of these methods is that the local outliers are usually misdetected for the data set with multiple components or clusters. To detect the local outliers, a top-$n$ $k$-th nearest neighbor distance is proposed in \cite{ramaswamy2000efficient}, in which the distance from an object to its $k$-th nearest neighbor indicates outlierness of the object. The cluster-based methods detect the outlier in the process of finding clusters. The object does not belong any cluster is considered as the outlier \cite{ester1996density}\cite{zhang1997birch}\cite{brito1997connectivity}. 

In density-based methods, an outlier is detected when its local density differs from its neighborhood. Different density estimation methods can be applied to measure the density. In Local Outlier Factor (LOF) \cite{breunig2000lof}, an outlierness score, indicating how an object differs from its locally reachable neighborhood, is measured. Previous studies \cite{tang2002enhancing}\cite{zhang2009new} have shown that it is more reliable to consider the objects with the highest LOF scores as outliers, instead of comparing the LOF score with a threshold. Several variations of the LOF are also proposed \cite{zhang2009new}\cite{jin2006ranking}. In \cite{zhang2009new}, a Local Distance-based Outlier Factor (LDOF) using the relative distance from an object to its neighbors is proposed for outlier detection in scattered datasets. In \cite{jin2006ranking}, a INFLuenced Outlierness (INFLO) score is measured by considering both neighbors and reverse neighbors of an object when estimating its relative density distribution \cite{jin2006ranking}. To address the issue that the LOF method and its variants do not consider the underlying pattern of data, Tang et. al. proposed a connectivity-based outlier factor (COF) scheme in \cite{tang2001robust}. While the LOF-based and COF-based outlier detection methods use the relative distance to estimate the density, several other density-based methods are proposed based on kernel density estimation \cite{latecki2007outlier}\cite{gao2011rkof}\cite{schubert2014generalized}. For
example, Local Density Factor (LDF) \cite{latecki2007outlier} extends the LOF by using kernel density estimation. In \cite{schubert2014generalized}, similar to the LOCI, a relative density score termed KDEOS is calculated using kernel density estimation and applies the $z$-score transformation for score normalization. 

In this paper, we propose an outlier detection method based on the local kernel density estimation for robust local outlier detection. Instead of using the whole data set, the density of an object is estimated with the objects in its neighborhood. Three kinds of neighbors: $k$ nearest neighbors, reverse nearest neighbors, and shared nearest neighbors, are considered in our local kernel density estimation. A simple but efficient relative density calculation, termed Relative Density-based Outlier Score (RDOS), is introduced to measure the outlierness. Theoretical properties of the RDOS, including the expected value and the false alarm probability are derived, which suggests parameter settings in practical applications. We further employ the top-$n$ scheme to rank the objects with their outlierness, i.e., the objects with the highest RDOS values are considered as the outliers. Simulation results on both synthetic data sets and real-life data sets illustrate superior performance of our proposed method. 

The paper is organized as follows: In Section 2, we introduce the definition of the RDOS and present the detailed descriptions of our proposed outlier detection approach. In Section 3, we derive theoretical properties of the RDOS and discuss the parameter settings. In Section 4, we present experimental results and analysis, which show superior performance compared with previous approaches. Finally, conclusions are given in Section 5.


\section{Proposed Outlier Detection}
\subsection{Local Kernel Density Estimation}
We use the KDE method to estimate the density at the location of an object based on the given data set. Given a set of objects $\mathcal{X} = \{X_1, X_2, \cdots, X_{m} \}$, where $X_i \in \mathbb{R}^{d}$ for $i=1,2,\cdots, m$, the KDE method estimates the distribution as follows:
\begin{align}
\label{KDE}
p(X) = \frac{1}{m} \sum_{i=1}^m \frac{1}{h^{d}} K\left(\frac{X - X_i}{h} \right) 
\end{align}
where $K\left(\frac{X - X_i}{h} \right)$ is the defined kernel function with the kernel width of $h$, which satisfies the following conditions:
\begin{align}
\int K(u) d u = 1, \int u K(u) d u = 0, \text{and} \int u^2 K(u) d u > 0
\end{align}

A commonly used multivariate Gaussian kernel function is given by
\begin{align}
K\left(\frac{X - X_i}{h} \right)_{\text{Gaussian}} = \frac{1}{ (2 \pi)^{d/2} } \exp \left( -\frac{\| X - X_i \|^2}{2h} \right)
\end{align} 
where $\| X - X_i \|$ denotes the Euclidean distance between $X$ and $X_i$. The distribution estimate in Eq. (\ref{KDE}) offers many nice properties, such as its non-parametric property, continuity and differentiability \cite{epanechnikov1969non}. Also it is an asymptotic unbiased estimator of the density. 

To estimate the density at the location of the object $X_p$, we only consider its neighbors of $X_p$ as kernels, instead of using all objects in the data set. The reason for this is twofold: firstly, many complex real-life data sets usually have multiple clusters or components, which are the intrinsic patterns of the data. The density estimation using the full data set may lose the local difference in density and fail to detect the local outliers; secondly, the outlier detection will calculate the score for each object, and using the full data set would lead to a high computational cost, which has the complexity of $O(N^2)$ where $N$ is the total number of objects in the data set. 

To better estimate the density distribution in the neighbourhood of an object, we propose to use $k$ nearest neighbors, reverse nearest neighbors and shared nearest neighbors as kernels in KDE. Let $NN_{r}(X_p)$ be the $r$-th nearest neighbors of the object $X_p$, we denote the set of $k$ nearest neighbors of $X_p$ as $\mathcal{S}_{KNN}(X_p)$:
\begin{align}
\mathcal{S}_{KNN}(X_p) = \{NN_{1}(X_p), NN_{2}(X_p), \cdots, NN_{k}(X_p)\}
\end{align}
The \textit{reverse nearest neighbors} of the object $X_p$ are those objects who consider $X_p$ as one of their $k$ nearest neighbors \cite{tangENN}, i.e., $X$ is one reverse nearest neighbor of $X_p$ if $NN_r(X) = X_p$ for any $r \leq k$. The \textit{shared nearest neighbors} of the object $X_p$ are those objects who share one or more nearest neighbors with $X_p$, in other words, $X$ is one shared nearest neighbor of $X_p$ if $NN_r(X) = NN_s(X_p)$ for any $r, s \leq k$. We show these three types of nearest neighbors in Fig. \ref{neighbors}.

\begin{figure}[!ht]
\centering
\includegraphics[width= 12 cm]{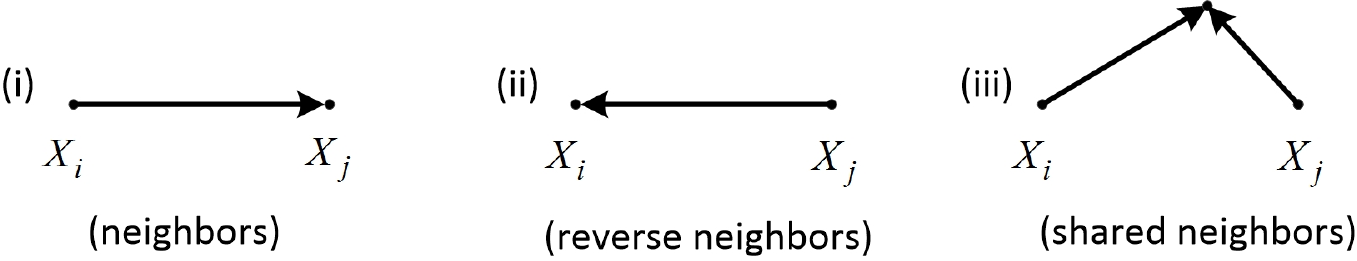}
\caption{Three types of nearest neighbors considered. Arrows from $X_i$ and $X_j$ to $NN_r(X_i)$ and $NN_s(X_j)$, respectively. }
\label{neighbors}
\end{figure}

We denote $\mathcal{S}_{RNN}(X_p)$ and $\mathcal{S}_{SNN}(X_p)$ by the sets of reverse nearest neighbors and shared nearest neighbors of $X_p$, respectively. For an object, there would be always $k$ nearest neighbors in $\mathcal{S}_{KNN}(X_p)$, while the sets of $\mathcal{RNN}(X_p)$ and $\mathcal{SNN}(X_p)$ can be empty or have one or more elements. Given the three data sets $\mathcal{S}_{KNN}(X_p)$, $\mathcal{S}_{RNN}(X_p)$ and $\mathcal{S}_{SNN}(X_p)$ for the object $X_p$, we form an extended local neighborhood by combining them together, denoted by $\mathcal{S}(X_p) = \mathcal{S}_{KNN}(X_p) \cup \mathcal{S}_{RNN}(X_p) \cup \mathcal{S}_{SNN}(X_p)$. Thus, the estimated density at the location of $X_p$ is written as
\begin{align}
p(X_p) = \frac{1}{| \mathcal{S}(X_p) | + 1} \sum_{X \in \mathcal{S}(X_p) \cup \{ X_p \}} \frac{1}{h^{d}} K\left(\frac{X - X_p}{h} \right)
\end{align}
where $| \mathcal{S}|$ denotes the number of elements in the set of $\mathcal{S}$. 

\subsection{Relative Density-based Outlier Factor}
After estimating the density at the locations of all objects, we propose a novel relative density-based outlier factor (RDOS) to measure the degree to which the density of the object $X_p$ deviates from its neighborhood, which is defined as follows:
\begin{align}
\label{RDOS}
RDOS_k(X_p) = \frac{\sum_{X_i \in \mathcal{S}(X_p)} p(X_i)}{|\mathcal{S}(X_p)| p(X_p)}
\end{align}
The RDOS is the ratio of the average neighborhood density to the density of interested object $X_p$. If $RDOS_k(X_p)$ is much larger than 1, then the object $X_p$ would be outside of a dense cluster, indicating that $X_p$ would be an outlier. If $RDOS_k(X_p)$ is equal or smaller than 1, then the object $X_p$ would be surrounded by the same dense neighbors or by a sparse cloud, indicating that $X_p$ would not be an outlier. In practice, we would like to rank the RDOS values and detect top-$n$ outliers. We summarize our algorithm in Algorithm \ref{algorithm_rdos}, which takes the KNN graph as input. The KNN graph is a directed graph in which each object is a vertex and is connected to its $k$ nearest neighbors with an outbound direction. In the KNN graph, an object will have $k$ outbound edges to the elements in $\mathcal{S}_{KNN}$, and have none, one or more inbound edges. The KNN graph construction using the brute-force method has the computational complexity of $O(N^2)$ for $N$ objects, and it can be reduced to $O(N \log N)$ using the $k-d$ trees \cite{bentley1975multidimensional}. Using the KNN graph KNN-G, it is easy to obtain the $k$ nearest neighbors $\mathcal{S}_{KNN}$, reverse nearest neighbors $\mathcal{S}_{RNN}$ and shared nearest neighbors $\mathcal{S}_{SNN}$ with an approximate computational cost of $O(N)$. For each object, we form a set of local nearest neighbors $\mathcal{S}$ with the combination of $\mathcal{S}_{KNN}$, $\mathcal{S}_{RNN}$ and $\mathcal{S}_{SNN}$, and calculate the density at the location of the object based on the set of $\mathcal{S}$. Then, we calculate the RDOS value of each object based on the densities of local neighbors in $\mathcal{S}$.  The top-$n$ outliers are obtained by sorting the RDOS values in a descending way. If one wants to determine whether an object $X_p$ is outlier, we can compare the value of $RDOS_k(X_p)$ with a threshold $\tau$, i.e., we determine an object is outlier if $RDOS_k(X_p)$ satisfies
\begin{align}
\label{outlier_threshold}
RDOS_k(X_p) > \tau
\end{align}
where the threshold $\tau$ is usually a constant value that is pre-determined by users.
		
\begin{figure}[!t]
 \removelatexerror
  \begin{algorithm}[H]
   \label{algorithm_rdos}
   \caption{RDOS for top-$n$ outlier detection based on the KNN graph}
   \textbf{INPUT: } $k$, $\mathcal{X}$, $d$, $h$, the KNN graph KNN-G.\\
   \textbf{OUTPUT: } Top-$n$ objects in $\mathcal{X}$.\\
	  
   \textbf{ALGORITHM:}\\  		
   \ForEach{object $X_p \in \mathcal{X}$}{
	\nl	 $\mathcal{S}_{KNN}(X_p) = getOutboundObjects(\text{KNN-G}, X_p)$: get $k$ nearest neighbors of $X_p$;\\
	\nl		$\mathcal{S}_{RNN}(X_p) = getInboundObjects(\text{KNN-G}, X_p)$: get reverse nearest neighbors of $X_p$;\\	
	\nl		$\mathcal{S}_{SNN}(X_p) = \emptyset$: initialize shared nearest neighbors of $X_p$;\\
	\nl		\ForEach{object $X \in \mathcal{S}_{KNN}(X_p)$}{
	\nl			$\mathcal{S}_{RNN}(X) = getInboundObjects(\text{KNN-G}, X)$;
	\nl			$\mathcal{S}_{SNN}(X_p) = \mathcal{S}_{SNN}(X_p) \cup \mathcal{S}_{RNN}(X)$: get objects who share $X$ as nearest neighbors with $X_p$;\\
		}
	\nl $\mathcal{S}(X_p) = \mathcal{S}_{KNN}(X_p) \cup \mathcal{S}_{RNN}(X_p) \cup \mathcal{S}_{SNN}(X_p)$;\\
	\nl $p(X_p) = getKernelDensity(\mathcal{S}(X_p), X_p, h)$: estimate the local kernel density at the location of $X_p$; 
   }
   
   \ForEach{object $X_p \in \mathcal{X}$}{
	\nl	Calculate $RDOS_k(X_p)$ for $X_p$ according to Eq. (\ref{RDOS}); 
   }
   
   \nl Sort RDOS in a descending way and output the top-$n$ objects.   
  \end{algorithm}
\end{figure}

\section{Theoretical Properties}
In this section, we analyze several nice properties of the proposed outlierness metric. In Theorem 1, we give the expected value of RDOS when the object and its neighbors are sampled from the same distribution, which indicates the lower bound of RDOS for outlier detection. 

\newtheorem{theorem}{Theorem}
\begin{theorem}
Let the object $X_p$ be sampled from a continuous density distribution. For $N\rightarrow \infty$, the RDOS equals 1 with probability 1, i.e., $RDOS_k(X_p) = 1$, when the kernel function $K$ is nonnegative and integrable.
\end{theorem}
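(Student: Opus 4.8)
The plan is to exploit a degeneracy that arises because the bandwidth $h$ and the neighbor count $k$ are held fixed while $N\to\infty$: the extended neighborhood of every object collapses onto the object itself, so each local density estimate converges to one and the same constant, and the RDOS ratio is forced to $1$. Thus the statement is really about the shrinking of neighbor distances rather than about consistency of the kernel estimate.

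First I would establish the collapse of the neighborhood. Since $X_p$ is drawn from a continuous density it lies almost surely in the support, so for any fixed radius $\epsilon>0$ the probability mass of the ball $B(X_p,\epsilon)$ is strictly positive; the expected number of sample points in $B(X_p,\epsilon)$ then grows linearly in $N$, and a Borel--Cantelli / law-of-large-numbers argument shows that the ball eventually contains more than $k$ points almost surely. Hence the distance from $X_p$ to its $k$-th nearest neighbor tends to $0$ a.s., and the same holds simultaneously for every object near $X_p$. I would then propagate this to all three neighbor sets: for $\mathcal{S}_{KNN}(X_p)$ it is immediate; for a reverse neighbor $X$ the point $X_p$ is among the $k$-NN of $X$, so $\|X-X_p\|$ is at most the (vanishing) $k$-NN distance of $X$; and for a shared neighbor $X$ with common neighbor $Y$, the triangle inequality $\|X-X_p\|\le \|X-Y\|+\|Y-X_p\|$ bounds the distance by a sum of two vanishing $k$-NN distances. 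Therefore every point of $\mathcal{S}(X_p)$, and also every neighbor-of-neighbor entering the computation of $p(X_i)$, lies in a ball $B(X_p,\rho_N)$ with $\rho_N\to 0$ a.s.

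Next I would pass to the limit in the density estimates. Each kernel argument $(X-X_i)/h$ appearing in $p(X_i)$ has norm bounded by $\rho_N/h\to 0$, so by continuity of $K$ at the origin every kernel evaluation converges to $K(0)$ and hence $p(X_i)\to K(0)/h^{d}$, the same constant for every $X_i\in\mathcal{S}(X_p)\cup\{X_p\}$. Substituting into the definition of RDOS, the numerator $\sum_{X_i\in\mathcal{S}(X_p)}p(X_i)$ and the denominator $|\mathcal{S}(X_p)|\,p(X_p)$ both converge to $|\mathcal{S}(X_p)|\,K(0)/h^{d}$, the common factor $|\mathcal{S}(X_p)|$ cancels even if it grows with $N$, and $RDOS_k(X_p)\to 1$.

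The main obstacle will be the uniformity needed in the previous step together with the precise hypotheses on $K$. Because the number of reverse and shared neighbors may itself grow with $N$, the convergence $p(X_i)\to K(0)/h^{d}$ must hold uniformly over $X_i\in\mathcal{S}(X_p)$; I would secure this by confining all points involved to the single shrinking ball $B(X_p,\rho_N)$ and invoking uniform continuity of $K$ near the origin. Moreover, to avoid a $0/0$ limit the argument genuinely requires $0<K(0)<\infty$ and continuity of $K$ at $0$, which is slightly stronger than the stated ``nonnegative and integrable'' but is satisfied by the Gaussian kernel used throughout; I would therefore either add these mild regularity assumptions explicitly or remark that they hold for the kernels of interest.
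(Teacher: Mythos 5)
Your proposal is correct in spirit but takes a genuinely different route from the paper. The paper's proof treats $p(X_p)$ as a consistent kernel density estimator: it asserts (citing the KDE literature) that $\mathbb{E}[p(X_p)] = f(X_p)\int K(u)\,du = f(X_p)$ with asymptotic variance zero, argues that the neighborhood average $\bar{p}(X_p)$ has the same limit because the neighbors collapse onto $X_p$, and concludes by taking the ratio of the two expectations. You instead keep $h$ fixed, observe that every kernel argument $(X - X_i)/h$ tends to zero once all neighbors collapse into a shrinking ball, and conclude that every local density estimate converges to the single constant $K(0)/h^{d}$, so the ratio is forced to $1$ without ever identifying the limit with the true density $f(X_p)$. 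Each approach buys something: the paper's version connects the estimate to $f(X_p)$, but that consistency statement secretly requires a shrinking bandwidth ($h\to 0$ with the effective sample size times $h^{d}\to\infty$), which is not among the theorem's hypotheses and is at odds with the fixed-$h$, fixed-$k$ regime actually used in the algorithm; it also proves convergence of a ratio of expectations rather than the almost-sure statement claimed. Your degenerate-collapse argument is more elementary, matches the literal asymptotic regime, and delivers the almost-sure conclusion directly, at the cost of needing $K$ continuous at the origin with $0 < K(0) < \infty$ --- a mild strengthening of ``nonnegative and integrable'' that you correctly flag and that the Gaussian kernel satisfies. Your explicit handling of reverse and shared neighbors via the triangle inequality, and your attention to uniformity of the convergence over a neighbor set whose cardinality may grow with $N$, fill in steps the paper simply asserts when it says the objects in $\mathcal{S}(X_p)$ lie within radius $r \to 0$.
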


\begin{proof}
For a fixed $k$, $N \rightarrow \infty$ indicates that the objects in $\mathcal{S}(X_p)$ locate in the local neighborhood of $X_p$ with the radius $r \rightarrow 0$.  Considering data sampled from a continuous density distribution $f(x)$, the expectation of the density estimation at $X_p$ exists and is consistent to the true one \cite{steckley2006estimating}:
\begin{align}
\mathbb{E}\left[ p(X_p) \right] = f(X_p) \int K(u) du = f(X_p)
\end{align}
and its asymptotic variance is given by \cite{steckley2006estimating}
\begin{align}
\mathbb{V}ar\left[ p(X_p) \right] = 0
\end{align}
Meanwhile, the average density at the neighborhood of $X_p$ with the radius of $r \rightarrow 0$ can be given by
\begin{align}
\mathbb{E}[\bar{p}(X_p)] = \mathbb{E}\left[ \frac{\sum_{X_i \in \mathcal{S}(X_p)} p(X_i)}{|\mathcal{S}(X_p)|} \right] = \mathbb{E}\left[ p(X_p) \right] = f(X_p)
\end{align}
Taking the ratio, we get 
\begin{align}
\mathbb{E}[\bar{p}(X_p)] / \mathbb{E}\left[ p(X_p) \right] = 1
\end{align}
\end{proof}

This theorem shows that when $RDOS_k(X_p) \approx 1$, we could say that the object $X_p$ is not an outlier. Since RDOS is always positive, when $ 0 < RDOS_k(X_p) < 1$, we could say the object $X_p$ can be ignored in outlier detection. Only these objects whose RDOS values are larger than 1 are possible to be outliers. 

Following the work in \cite{zhang2009new}, we next examine the upper-bound false detection probability to give a sense of threshold selection in practice. 

\begin{theorem}
Let $\mathcal{S}(X_p)$ be the set of local neighbors of $X_p$ in RDOS, which are assumed to be uniformly distributed in ball $B_r$ centered at $X_p$ with the radius of $r$. Using the Gaussian kernel, the probability of false detecting $X_p$ as an outlier is given by
\begin{align}
P[RDOS_k(X_p) > \gamma] \leq \exp \left( -\frac{2 (\gamma - 1)^2 (|\mathcal{S}| + 1)^2 (2\pi)^d h^{2d} }{|\mathcal{S}| (2|\mathcal{S}| + \gamma + 1)^2 V^2}\right)
\end{align}
where $h$ is the kernel width and $V$ is the volume of ball $B_r$.
\end{theorem}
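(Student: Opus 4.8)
The exponential form of the claimed bound, with a factor of $2$ in the numerator of the exponent and the square of a per-object range in the denominator, is the unmistakable signature of \emph{Hoeffding's inequality}. The plan is therefore to realize the false-alarm event as a sum of bounded, (conditionally) independent random variables deviating above its mean, and then to read off the constants.

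First I would rewrite the event in additive form. Since
\begin{align}
RDOS_k(X_p) > \gamma \iff \sum_{X_i \in \mathcal{S}(X_p)} p(X_i) - \gamma |\mathcal{S}(X_p)| \, p(X_p) > 0 ,
\end{align}
the natural object to study is $Z = \sum_{X_i \in \mathcal{S}(X_p)} \big( p(X_i) - \gamma\, p(X_p) \big)$, a sum of $|\mathcal{S}(X_p)|$ terms indexed by the neighbors. Under the uniform-in-$B_r$ assumption the true density is $f \equiv 1/V$, so Theorem~1 gives $\mathbb{E}[p(X_i)] = \mathbb{E}[p(X_p)] = 1/V$ and hence $\mathbb{E}[Z] = (1-\gamma)|\mathcal{S}(X_p)|/V < 0$ for $\gamma > 1$. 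The event $Z > 0$ is thus a deviation of $Z$ above its mean by $t = (\gamma-1)|\mathcal{S}(X_p)|/V$; this is precisely what produces the factors $(\gamma-1)^2$ and $V^{-2}$ in the stated exponent, and it reuses Theorem~1 to pin the mean at the detection boundary $RDOS = 1$.

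Next I would supply the per-term ranges required by Hoeffding. For the Gaussian kernel every contribution obeys $0 < \tfrac{1}{h^d} K(\cdot) \le \tfrac{1}{h^d (2\pi)^{d/2}}$, so each local density $p(X)$, being a normalized average of such contributions over $|\mathcal{S}(X)|+1$ objects, lies in an interval whose width is governed by $\tfrac{1}{h^d(2\pi)^{d/2}}$ and by the normalization $(|\mathcal{S}|+1)$. Squaring this maximal contribution is what yields the product $(2\pi)^d h^{2d}$ in the numerator, while the count of $|\mathcal{S}|$ summands supplies the factor $|\mathcal{S}|$ in the denominator and the $+1$ in $(|\mathcal{S}|+1)^2$ tracks the self-kernel normalization. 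Combining the range of a numerator density $p(X_i)$ with that of the subtracted term $\gamma\, p(X_p)$ is what is engineered to collapse into the factor $(2|\mathcal{S}| + \gamma + 1)$ in the final denominator; feeding $t$, the summand count $|\mathcal{S}|$, and this combined width into $\exp\big(-2t^2/\sum_i (b_i-a_i)^2\big)$ reproduces the asserted inequality after simplification.

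The main obstacle is that $Z$ is not literally a sum of independent variables and $RDOS_k(X_p)$ is a ratio with a random denominator: the densities $p(X_i)$ share neighbors with one another and with $p(X_p)$, so the naive Hoeffding hypotheses fail. I would address this either by conditioning on $X_p$ and controlling $p(X_p)$ through a deterministic two-sided bound — its self-kernel term alone gives $p(X_p) \ge \tfrac{1}{(|\mathcal{S}|+1)h^d(2\pi)^{d/2}}$ — thereby eliminating the random denominator before invoking the inequality, or by passing to the small-bandwidth regime in which the neighbor overlaps are negligible and the contributions are effectively i.i.d. The delicate bookkeeping is making these two-sided density bounds tight enough that the range collapses \emph{exactly} to the stated factor $(2|\mathcal{S}|+\gamma+1)$ rather than to a looser constant; once the range is pinned, everything else is routine substitution into Hoeffding's bound.
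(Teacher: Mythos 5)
You have correctly identified the overall architecture --- turn the false-alarm event into a deviation of $\bar{p}(X_p)-\gamma\,p(X_p)$ above its mean, use Theorem~1 (with $f\equiv 1/V$ for the uniform law on $B_r$) to place that mean at $(1-\gamma)/V$, and extract the constants from the maximal Gaussian-kernel contribution $\tfrac{1}{(2\pi)^{d/2}h^d}$. But there is a genuine gap at exactly the point you flag as the ``main obstacle,'' and neither of your proposed repairs closes it. The quantity $\sum_{X_i\in\mathcal{S}}p(X_i)$ is not a sum of independent terms: each $p(X_i)$ uses \emph{all} the neighbors as kernel centers, so moving a single $X_i$ perturbs every summand, not just its own. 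Conditioning on $X_p$ does nothing here ($X_p$ is the fixed center; the coupling is among the neighbors themselves), and passing to a small-bandwidth regime is an asymptotic argument that cannot produce the stated finite-sample bound with its exact constants. Hoeffding's inequality in its sum-of-independent-variables form simply does not apply.

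The tool the paper uses --- and the one your sketch is missing --- is McDiarmid's bounded-differences inequality, which requires only that $f(X_1,\dots,X_{|\mathcal{S}|})$ change by at most $c_i$ when the $i$-th \emph{input point} is replaced, with the inputs (not the summands) independent; that hypothesis is satisfied by the i.i.d.\ uniform neighbors. The paper computes the coordinate-wise oscillation $c_1$ of $p(X_p)$ and $c_2$ of $\bar{p}(X_p)$ separately: the $2|\mathcal{S}|$ inside the factor $(2|\mathcal{S}|+\gamma+1)$ comes precisely from the cross terms $K\bigl((X_i-X_j)/h\bigr)$ that change in every $p(X_j)$, $j\neq i$, when $X_i$ moves --- i.e.\ from the very dependency you identified --- and the $\gamma+1$ comes from adding $\gamma c_1$ to the direct term in $c_2$. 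Your plan of deriving a per-summand two-sided range for $p(X_i)$ and $\gamma p(X_p)$ would at best give a different (and not obviously matching) constant, and would still leave the independence hypothesis of Hoeffding unsatisfied. So the proposal needs to replace the Hoeffding step with a bounded-differences argument; once that substitution is made, the rest of your outline (the role of Theorem~1, the deviation level $t=(\gamma-1)/V$, and the $(2\pi)^d h^{2d}$ factor from squaring the kernel's peak) aligns with the paper's proof.
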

\begin{proof}
For simplicity of notation, we use $\mathcal{S}$ for $\mathcal{S}(X_p)$ and consider $X_p = 0$. Then, the density estimation at $X_p$ given the local neighbors $X_1, X_2, \cdots, X_{|\mathcal{S}|}$ is written as
\begin{align}
p(X_p) = \frac{1}{|\mathcal{S}| + 1} \sum_{X_i \in \mathcal{S} \cup X_p} \frac{1}{ (2 \pi)^{d/2} h^{d}} \exp \left( -\frac{\| X_i \|^2}{2h} \right)
\end{align}

and the average density estimation in the neighborhood of $X_p$ is written as
\begin{align}
& \bar{p}(X_p) = \frac{1}{|\mathcal{S}|} \sum_{X_i \in \mathcal{S}} p(X_i) \nonumber \\
& = \frac{1}{|\mathcal{S}| (|\mathcal{S}| + 1) } \sum_{X_i \in \mathcal{S}} \sum_{X_j \in \mathcal{S} \cup X_p} \frac{1}{ (2 \pi)^{\frac{d}{2}} h^{d}} \exp \left( -\frac{\| X_i - X_j \|^2}{2h} \right)
\end{align}

For $X_i$, $i = 1, 2, \cdots, |\mathcal{S}|$, uniformly distributed in ball $B_r$, we can compute the expectation of both $p(X_p)$ and $\bar{p}(X_p)$ from Theorem 1, which is given by:
\begin{align}
\mathbb{E}[\bar{p}(X_p)] = \mathbb{E}[p(X_p)] = \frac{1}{V} = \frac{\pi^{n/2} r^{n}}{\Gamma(n/2 + 1)}
\end{align}
where $V$ is the volume of $n$-sphere $B_r$ and $n = d - 1$. The rest of proof follows the McDiarmid's Inequality which gives the upper bound of the probability that a function of i.i.d. variables $f(X_1, X_2, \cdots, X_{|\mathcal{S}|})$ deviates from its expectation. Let $f: \mathbb{R}^d \rightarrow \mathbb{R}$, $\forall i, \forall x_1, \cdots, x_{|\mathcal{S}|}, x_i^{'} \in \mathcal{S}$,
\begin{align}
|f(x_1, \cdots, x_i, \cdots, x_{|\mathcal{S}|}) - f(x_1, \cdots, x^{'}_i, \cdots, x_{|\mathcal{S}|}) | \leq c_i
\end{align}
Then, for all $\epsilon > 0$,
\begin{align}
\mathbb{P}[f - \mathbb{E}(f) \geq \epsilon ]  \leq \exp\left( \frac{-2\epsilon^2}{\sum_{i=1}^{|\mathcal{S}|} c^2_{i}}\right)
\end{align}
For $f_1 = p(X_p)$, we have
\begin{align}
& |f_1(x_1, \cdots, x_i, \cdots, x_{|\mathcal{S}|}) - f_1(x_1, \cdots, x_i^{'}, \cdots, x_{|\mathcal{S}|})| \nonumber \\
& = \frac{K(X_i/h) - K(X_i^{'}/h)}{ h^d (|\mathcal{S}| + 1)} \leq \frac{1 - \exp\left( -r^2 / 2h \right)}{(2\pi)^{d/2} h^d (|\mathcal{S}| + 1)} = c_{1}
\end{align}
For $f_2 = \bar{p}(X_p)$, we have
\begin{align}
& |f_2(x_1, \cdots, x_i, \cdots, x_{|\mathcal{S}|}) - f_2(x_1, \cdots, x_i^{'}, \cdots, x_{|\mathcal{S}|})| \nonumber \\
& = \frac{K(\frac{X_i}{h}) - K(\frac{X_i^{'}}{h}) + 2 \sum \limits_{j=1, j \neq i}^{|\mathcal{S}|} \left[ K(\frac{X_i - X_j}{h}) - K(\frac{X^{'}_i - X_j}{h}) \right] }{ h^d (|\mathcal{S}| + 1)} \nonumber \\
& \leq \frac{1 - \exp\left( -r^2 / 2h \right) + 2|\mathcal{S}| \left(1 - \exp\left( -2r^2 / h \right) \right) }{(2\pi)^{d/2} h^d (|\mathcal{S}| + 1)} = c_{2}
\end{align}
We define a new function $f = f_2 - \gamma f_1$, which is bounded by
\begin{align}
|f| \leq |f_2| + \gamma |f_1| \leq c_{2} + \gamma c_{1} \leq \frac{ 2|\mathcal{S}| + \gamma + 1 }{(2\pi)^{d/2} h^d (|\mathcal{S}| + 1)} = c
\end{align}
Then, the probability of false alarm is written as
\begin{align}
\mathbb{P}[RDOS_k(X_p) > \gamma ] & = \mathbb{P}[\bar{p}(X_p) - \gamma p(X_p) ] \nonumber \\
& = \mathbb{P}[f - \mathbb{E}(f) > t]
\end{align}
where $t = (\gamma - 1) / V$. From Theorem 1, we are only interested in the case of $RDOS_k(X_p) > 1$, i.e., $\gamma > 1$, and $t > 0$. Using the McDiarmid's Inequality, we have
\begin{align}
& \mathbb{P}[RDOS_k(X_p) > \gamma ] \leq \exp \left( -\frac{2 t^2}{\sum_{i=1}^{|\mathcal{S}|} c^2}\right) = \exp \left( -\frac{2 t^2}{|\mathcal{S}| c^2}\right) \nonumber \\
& \leq \exp \left( -\frac{2 (\gamma - 1)^2 (|\mathcal{S}| + 1)^2 (2\pi)^d h^{2d} }{|\mathcal{S}| (2|\mathcal{S}| + \gamma + 1)^2 V^2}\right)
\end{align}
\end{proof}

\section{Experimental Results and Analysis}
\subsection{Synthetic Data Sets}
We first test the proposed RDOS in two synthetic data sets for outlier detection. Our first synthetic data set includes two Gaussian clusters centered at $(0.5, 0.8)$ and $(2, 0.5)$, respectively, each of which has 100 data samples. There are three outliers around these two clusters, as indicated in Fig. \ref{example1}. To calculate the RDOS, we use $k = 21$ nearest neighbors and $h = 0.01$ in kernel functions. In Fig. \ref{example1_res}, we show the RDOS value of all data samples, where the color and the radius of circles denote the value of RDOS. It can be shown that the RDOS of these three outliers is significantly larger than that of non-outliers. 

\begin{figure}[!ht]
\centering
\includegraphics[width= 10 cm]{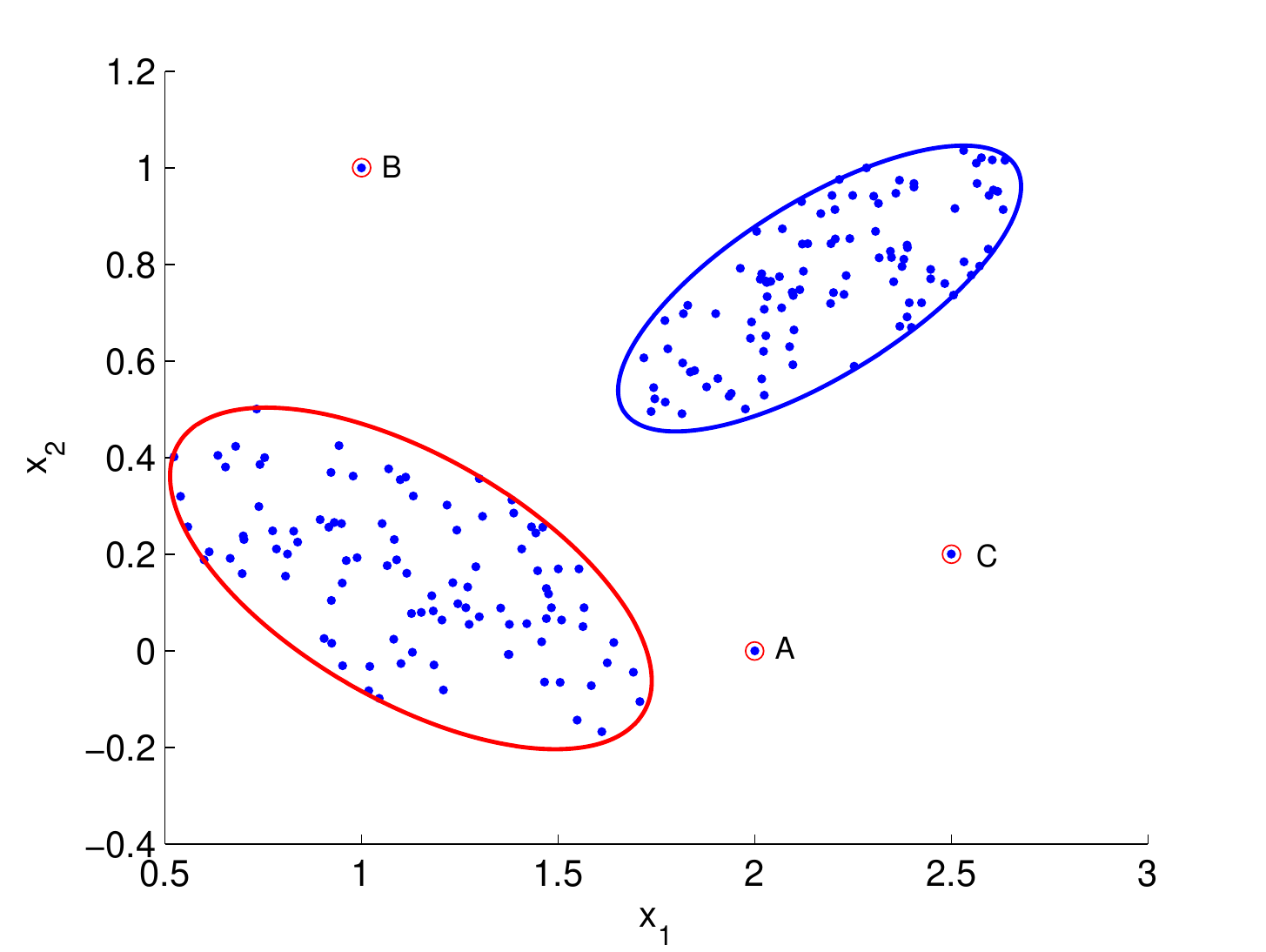}
\caption{The distribution of normal data and outliers, where the objects: $A$, $B$, and $C$ are outliers. }
\label{example1}
\end{figure}

\begin{figure}[!ht]
\centering
\includegraphics[width= 10 cm]{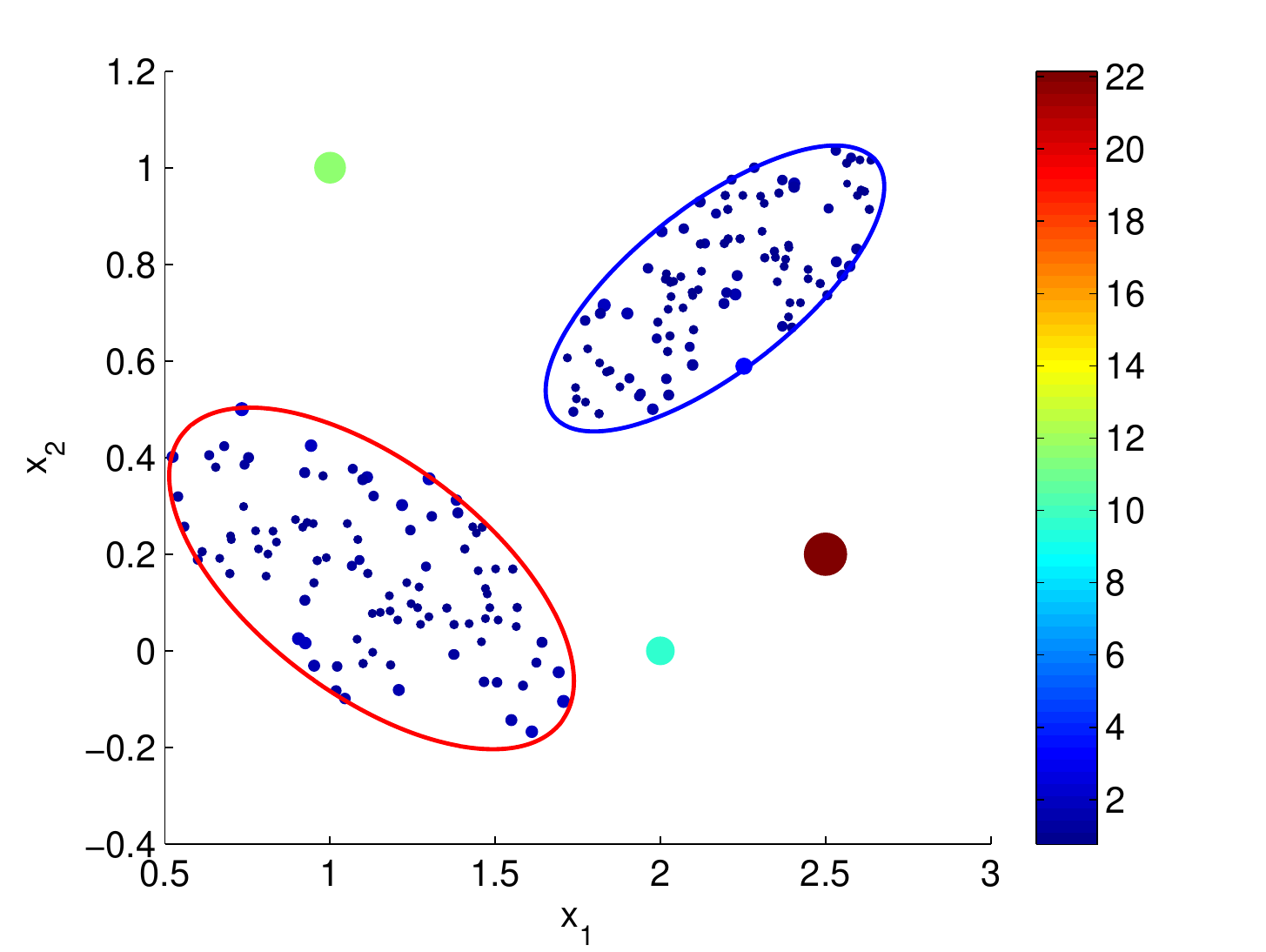}
\caption{The outlierness scores of all data samples, where the value of RDOS is illustrated by the color and the radius of the circle.}
\label{example1_res}
\end{figure}

The second synthetic data set used in our simulation consists of data samples uniformly distributed around a cosine curve, which can be written as
\begin{align}
x_2 = cos(x_1) + w
\end{align}
where $w \sim \mathcal{N}(0, \sigma^2)$. In our simulation, we use $\sigma^2 = 0.1$, and generate four outliers in this data set, as shown in Fig. \ref{example2}. The RDOS value of all data samples is shown in Fig. \ref{example2_res}, where both the color and the radius of circles indicate the value of RDOS. It is still shown that the RDOS-based method can effectively detect the outliers. 

\begin{figure}[!ht]
\centering
\includegraphics[width= 10 cm]{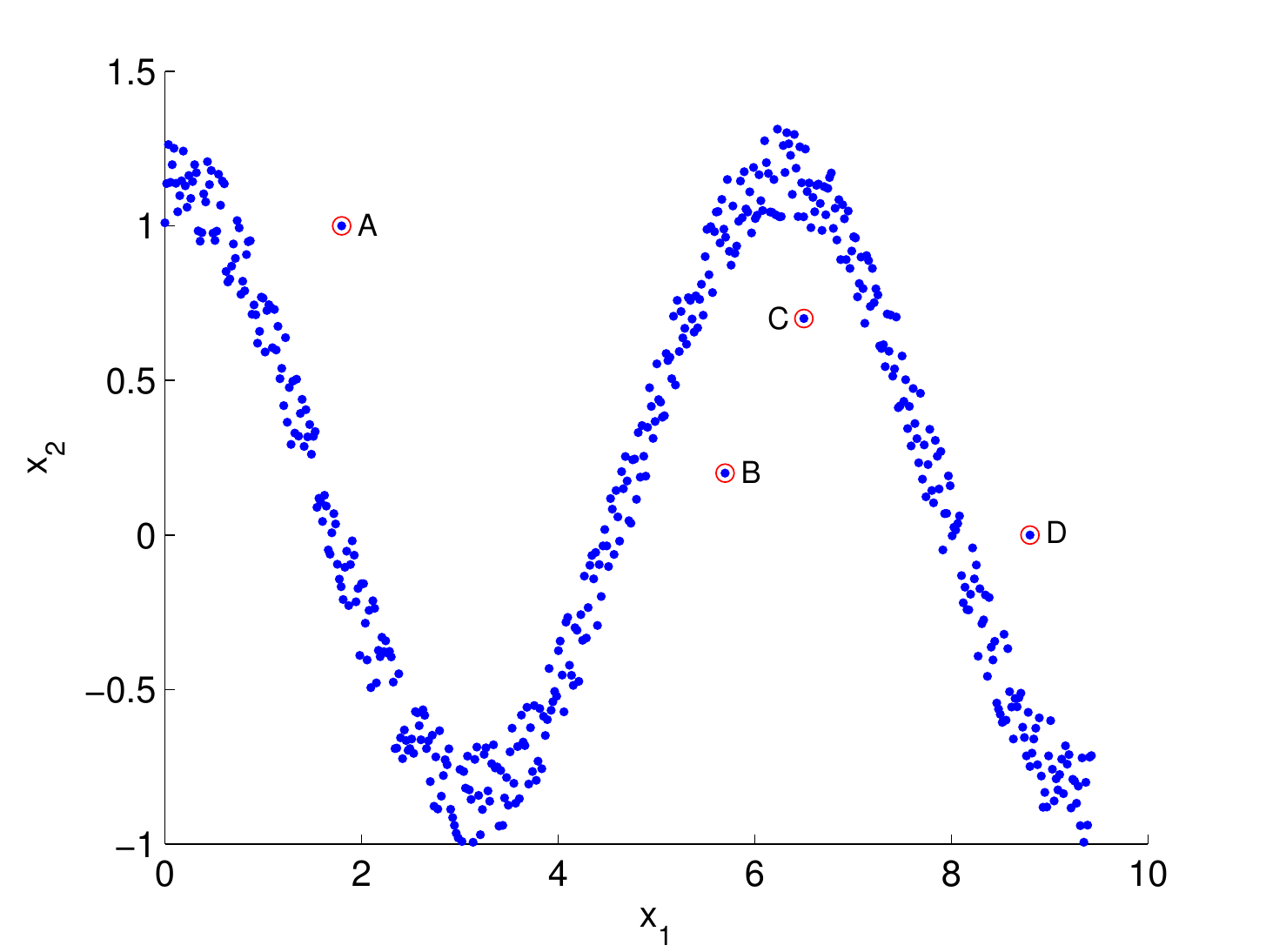}
\caption{The distribution of normal data and outliers, where $A$, $B$, $C$ and $D$ are considered as outliers. }
\label{example2}
\end{figure}

\begin{figure}[!ht]
\centering
\includegraphics[width= 10 cm]{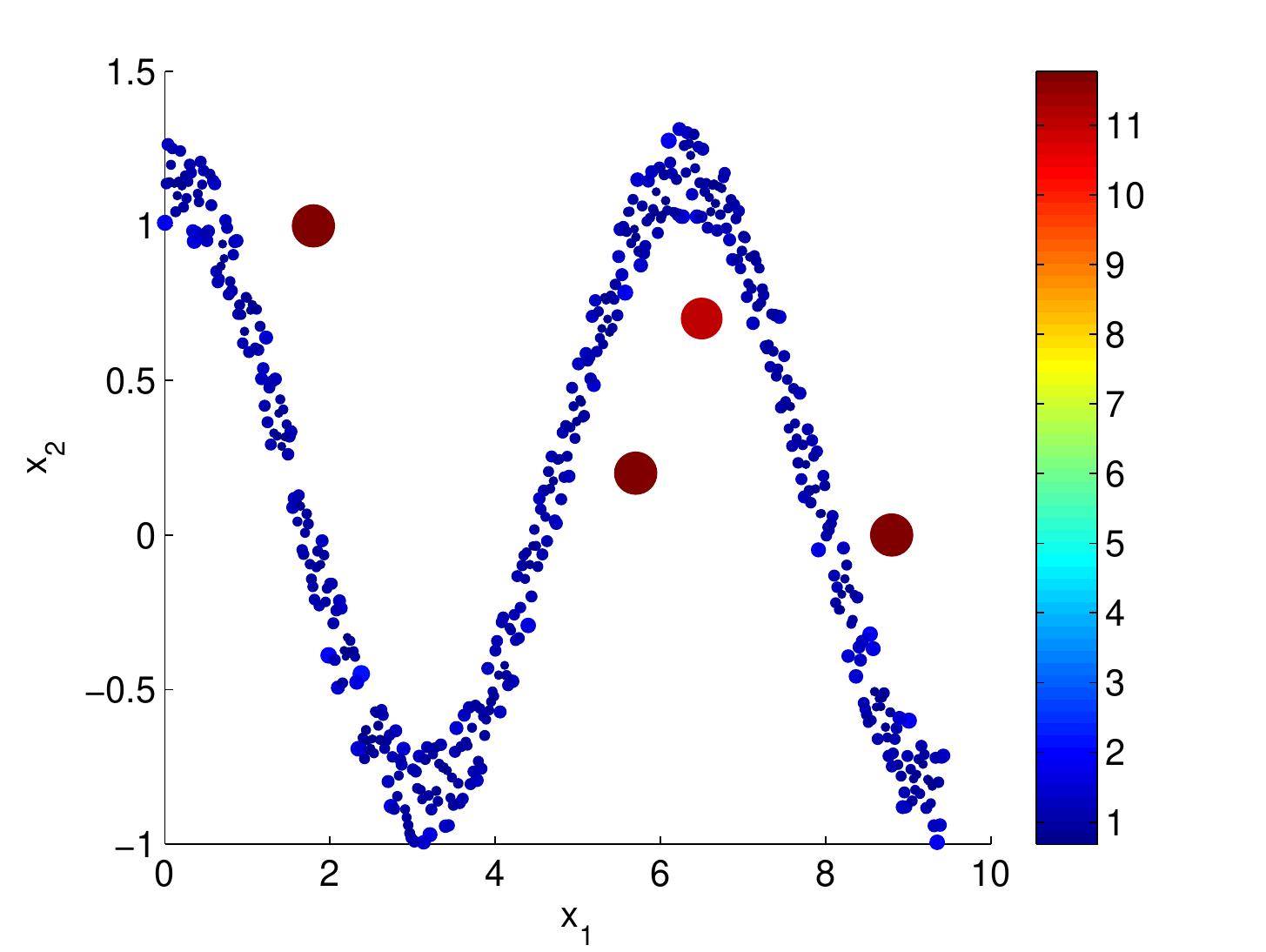}
\caption{The outlierness scores of all data samples, where the value of RDOS is illustrated by the color and the radius of the circle.}
\label{example2_res}
\end{figure}

\subsection{Real-Life Data Sets}
We also conduct outlier detection experiments on four real-life data sets to demonstrate the effectiveness of our proposed RDOS approach. All of these four data sets are originally from the UCI repository \cite{asuncion2007uci}, including \textsc{Breast Cancer}, \textsc{Pen-Local}, \textsc{Pen-Global}, and \textsc{Satellite}, but are modified for local and global outlier detection \cite{dataset}. We summarize the characteristics of these four data sets in Table \ref{tab_datasets}. Prior to calculating the RDOS, we first normalize the data ranging from 0 to 1. In Fig. \ref{data_examples}, we show the first two principle components of these four data sets, where the outliers are denoted by the solid circle. 

\begin{figure}[!ht]
\centering
\includegraphics[width= 10 cm]{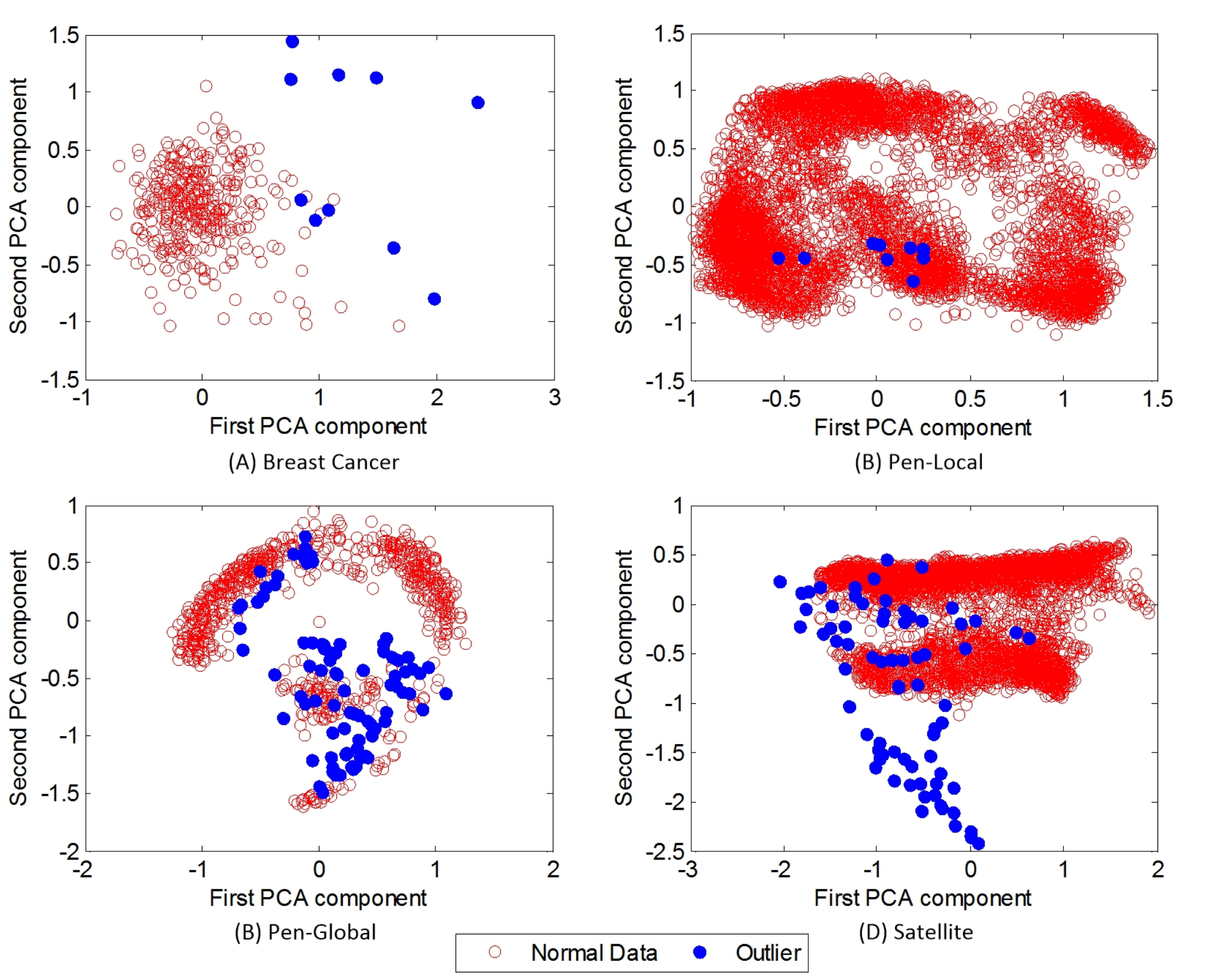}
\caption{The normal data and outliers in four real-life data sets: (A) \textsc{Breast Cancer}, (B) \textsc{Pen-Local}, (C) \textsc{Pen-Global}, and (D) \textsc{Satellite}. Only the first two principle components are shown. }
\label{data_examples}
\end{figure}

\begin{table}[!t]
\centering
\renewcommand {\arraystretch}{1.5}
\caption{The characteristics of four data sets}
\label{tab_datasets}
\centering
\begin{tabular}{l c c c c }
\Xhline{2\arrayrulewidth}
Dataset & $\#$ of features & $\#$ of outliers & $\#$ of data \\
\Xhline{2\arrayrulewidth}
\textsc{Breast Cancer} & $30$ & $10$ & $357$\\
\textsc{Pen-Local} & $16$ & $10$ & $6714$ \\
\textsc{Pen-Global}  & $16$ & $90$ & $719$ \\
\textsc{Satellite} & $36$ & $75$ & $5025$ \\
\Xhline{2\arrayrulewidth}
\end{tabular}
\end{table}

For each data sample, we calculate its RDOS and compare it with a threshold to determine whether it is an outlier. Since all these data sets are highly imbalanced, the use of overall accuracy is not appropriate. In our experiments, we use the metric of AUC (area under the ROC curve) for performance comparison. The ROC curve examines the performance of a binary classifier with different thresholds, leading to different pairs of false alarm rate and true positive rate. We compare our RDOS approach with another four widely used outlier detection approaches: Outlier Detection using Indegree Number (ODIN) \cite{Hautamaki:2004}, LOF \cite{breunig2000lof}, INFLO \cite{jin2006ranking}, and Mutual Nearest Neighbors (MNN) \cite{brito1997connectivity}. Since all of these examined methods are nearest neighbors-based methods, we evaluate the oultier detection performance with different $k$ values. Fig. \ref{breast_cancer_h1} shows the performance of AUC for the data set of \textsc{Breast Cancer}. It can be shown that our proposed RDOS approach, in general, performs better than other four approaches, and has a similar performance to the approaches of LOF and INFLO when $k$ is larger than 7. When $k=5$, the performance improvement of the proposed RDOS approach is largest, as illustrated in Fig. \ref{breast_cancer_roc}. 

%
%

\begin{figure}[!ht]
\centering
\includegraphics[width= 10 cm]{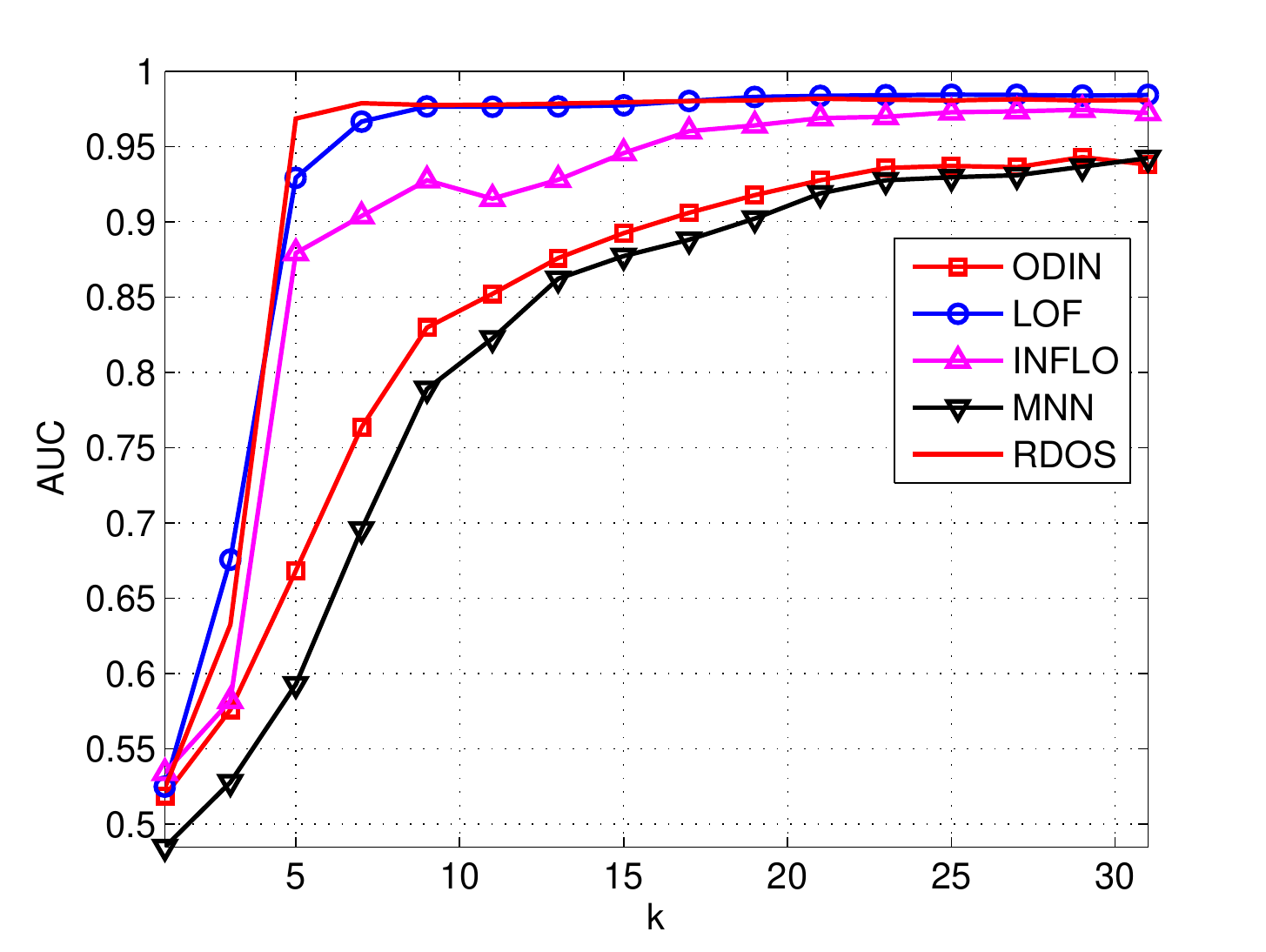}
\caption{The performance of AUC with different $k$ values for the data set of \textsc{Breast Cancer}}
\label{breast_cancer_h1}
\end{figure}

\begin{figure}[!ht]
\centering
\includegraphics[width= 10 cm]{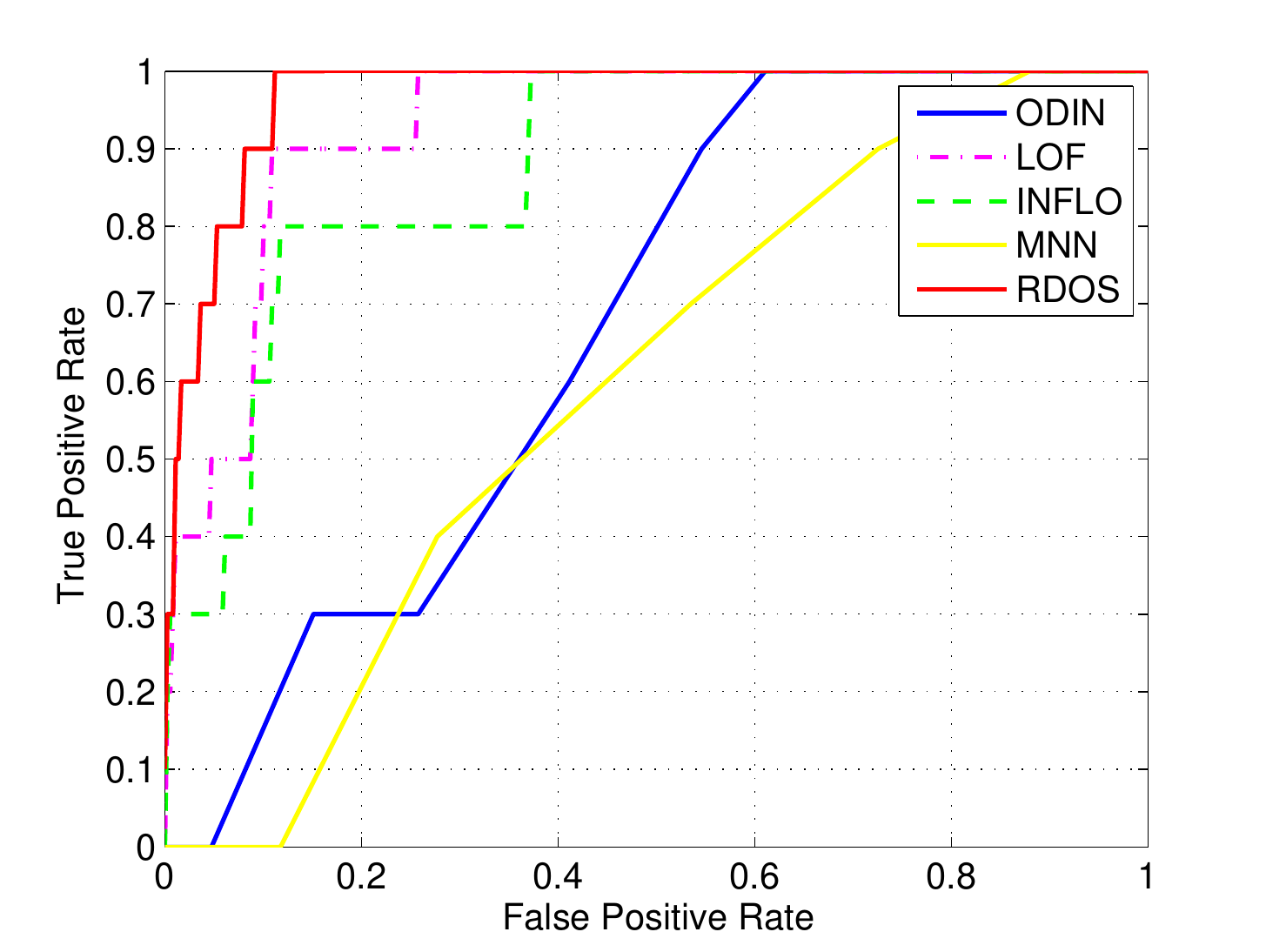}
\caption{The ROC for the data set of \textsc{Breast Cancer}, where $k = 5$}
\label{breast_cancer_roc}
\end{figure}

In Fig. \ref{pen_local_h1}, we show the performance of AUC for the data set of \textsc{Pen-Local}. It also shows that our RDOS approach generally outperforms other four approaches  when $k$ is less than 7. Specifically, in Fig. \ref{breast_cancer_roc}, we show the ROC curve for $k = 5$. Compared to the LOF and INFLO approaches, the performance difference is close to zero for a large $k$ value.  

\begin{figure}[!ht]
\centering
\includegraphics[width= 10 cm]{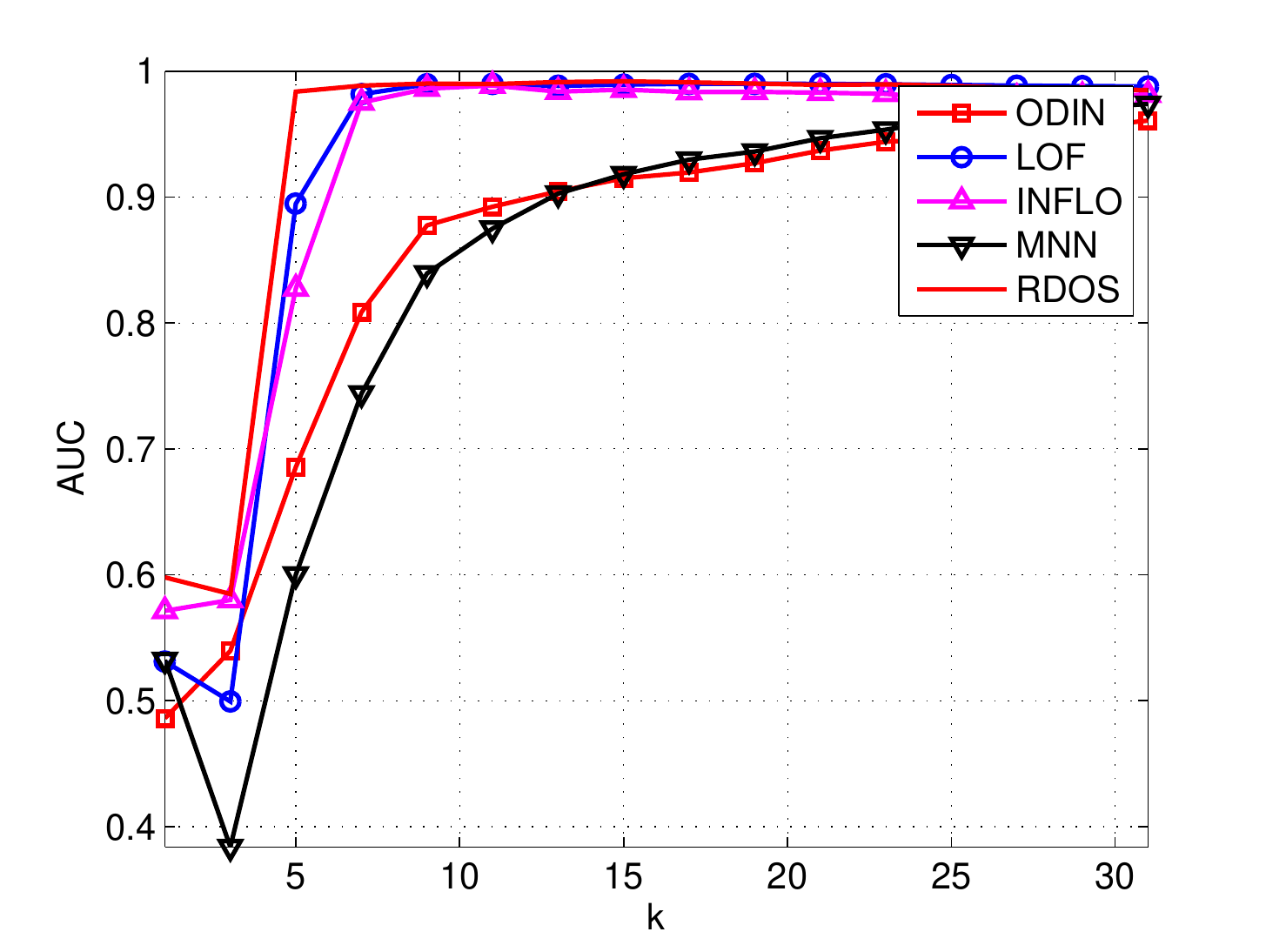}
\caption{The performance of AUC with different $k$ values for the data set of \textsc{Pen-Local}}
\label{pen_local_h1}
\end{figure}

\begin{figure}[!ht]
\centering
\includegraphics[width= 10 cm]{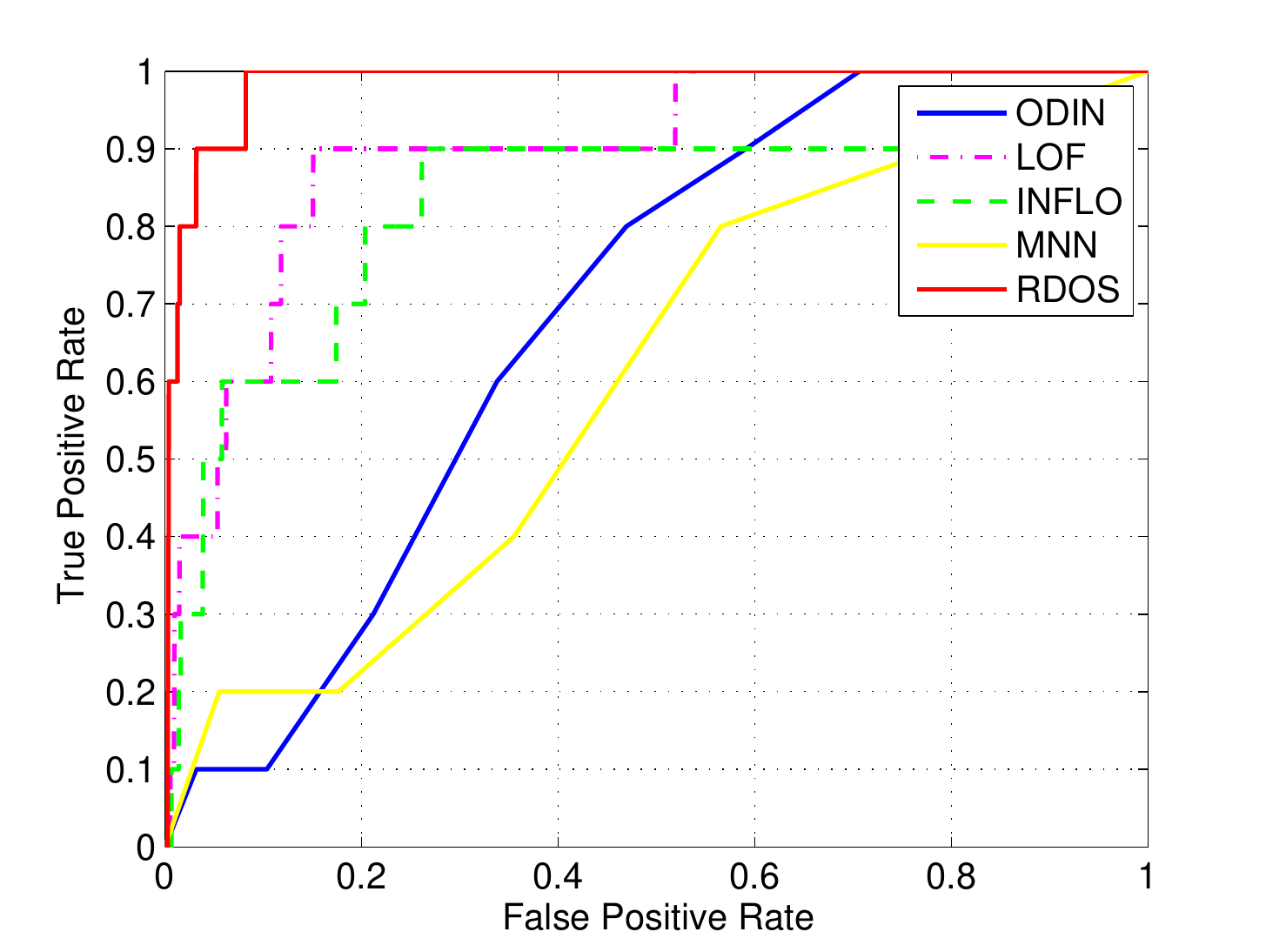}
\caption{The ROC for the data set of \textsc{Pen-Local}, where $k = 5$}
\label{pen_local_roc}
\end{figure}

In Fig. \ref{pen_global_h1}, we show the performance of AUC for the data set of \textsc{Pen-Global}. It shows a large performance improvement of our RDOS approach when the number of nearest neighbors $k$ increases. In Fig. \ref{pen_global_roc}, the ROC curves of all the five approaches are compared, when $k = 15$. From Fig. \ref{breast_cancer_h1}, \ref{pen_local_h1} and \ref{pen_global_h1}, it can be shown that RDOS $>$ LOF $>$ INFLO $>$ ODIN $>$ MNN, where the symbol ``$>$" means ``performs better than", for the data sets of \textsc{Breast Cancer}, \textsc{Pen-Local}, and \textsc{Pen-Global}. 

\begin{figure}[!ht]
\centering
\includegraphics[width= 10 cm]{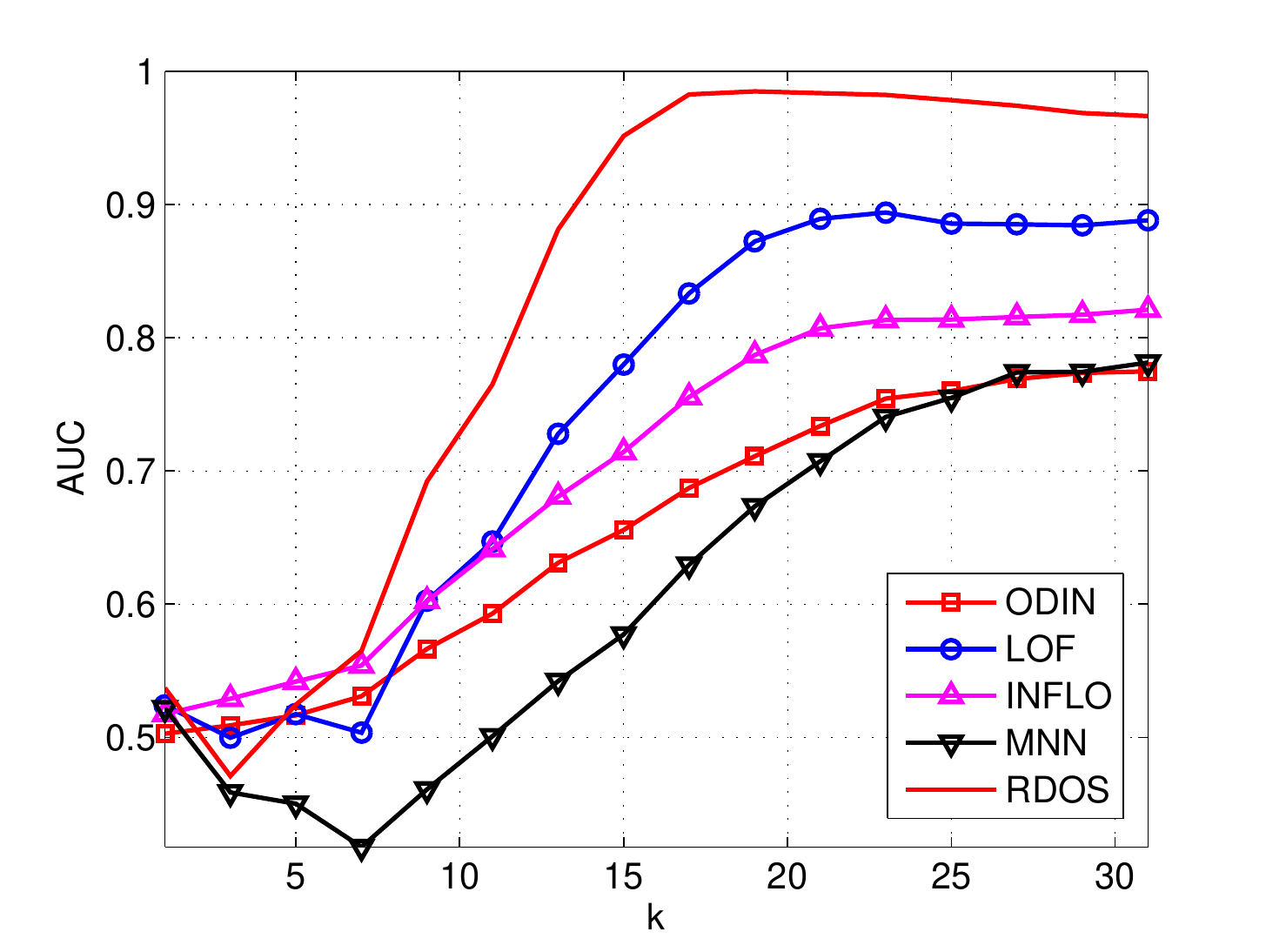}
\caption{The performance of AUC with different $k$ values for the data set of \textsc{Pen-Global}}
\label{pen_global_h1}
\end{figure}

\begin{figure}[!ht]
\centering
\includegraphics[width= 10 cm]{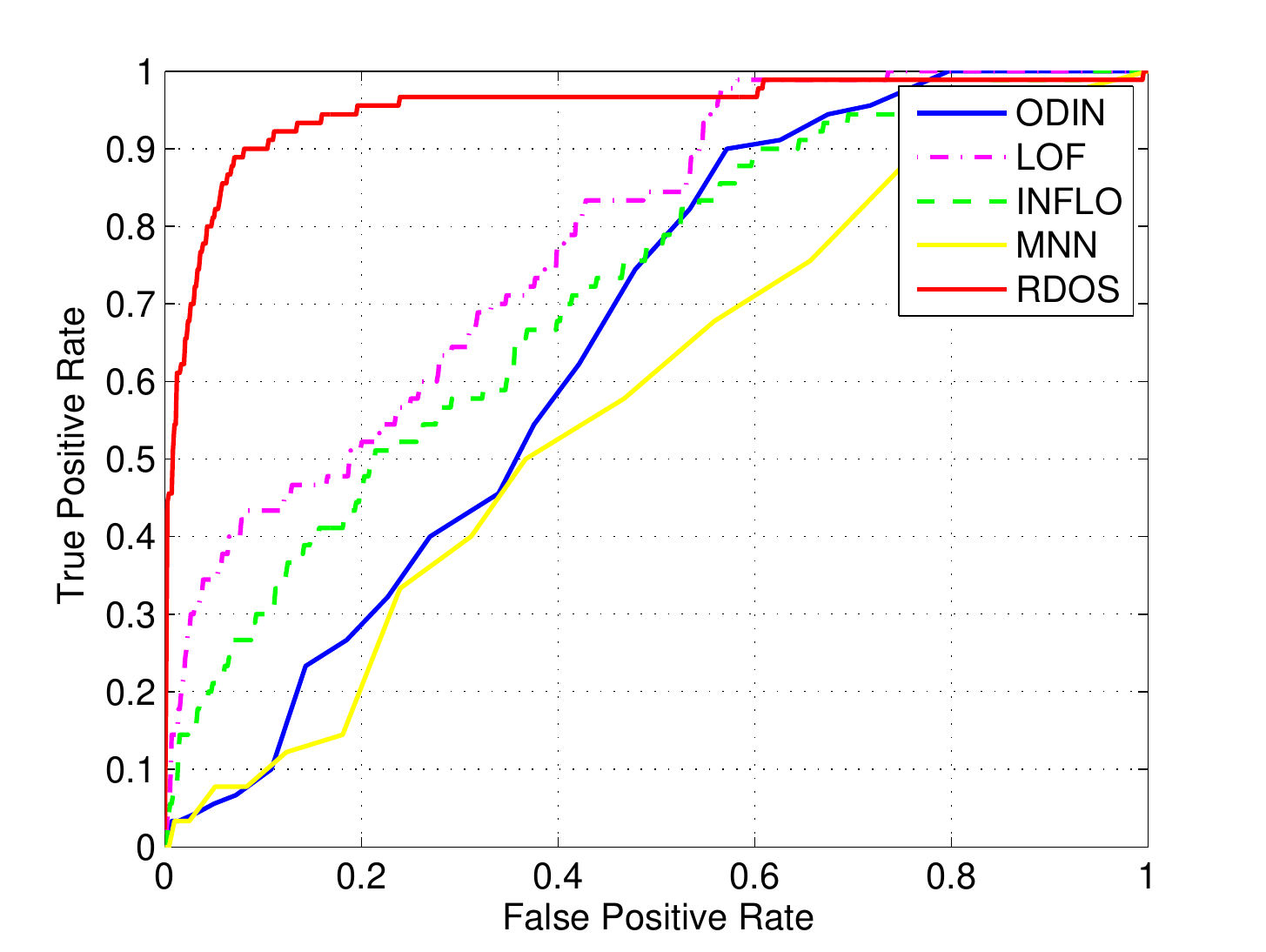}
\caption{The ROC for the data set of \textsc{Pen-Global}, where $k = 15$}
\label{pen_global_roc}
\end{figure}

Fig. \ref{satellite_h1} shows the performance of AUC for the data set of \textsc{Satellite}. When the number of nearest neighbors $k$ is less than 11, three approaches of RDOS, LOF and INFLO have a similar AUC. When the number of nearest neighbors $k$ is larger than 11 and less than 23, the approach of INFLO performs the best and our RDOS approach is the second. When the number of nearest neighbors $k$ is larger than 25, our RDOS approach has the best performance. Specifically, we show the ROC curve of all the five approaches in Fig. \ref{satellite_roc}. In general, we observe the following phenomena in our experiments: Firstly, the performance of all the five approaches is usually poor for a small $k$, and the improvement of our RDOS approach is not significant. When a small number of nearest neighbors are considered, the relative density in a neighborhood might not be well represented. Secondly, the proposed RDOS approach performs the best for specific $k$ values. Thirdly, we observe that the MNN approach has the worst performance, compared with other four approaches, for these four data sets. 

\begin{figure}[!ht]
\centering
\includegraphics[width= 10 cm]{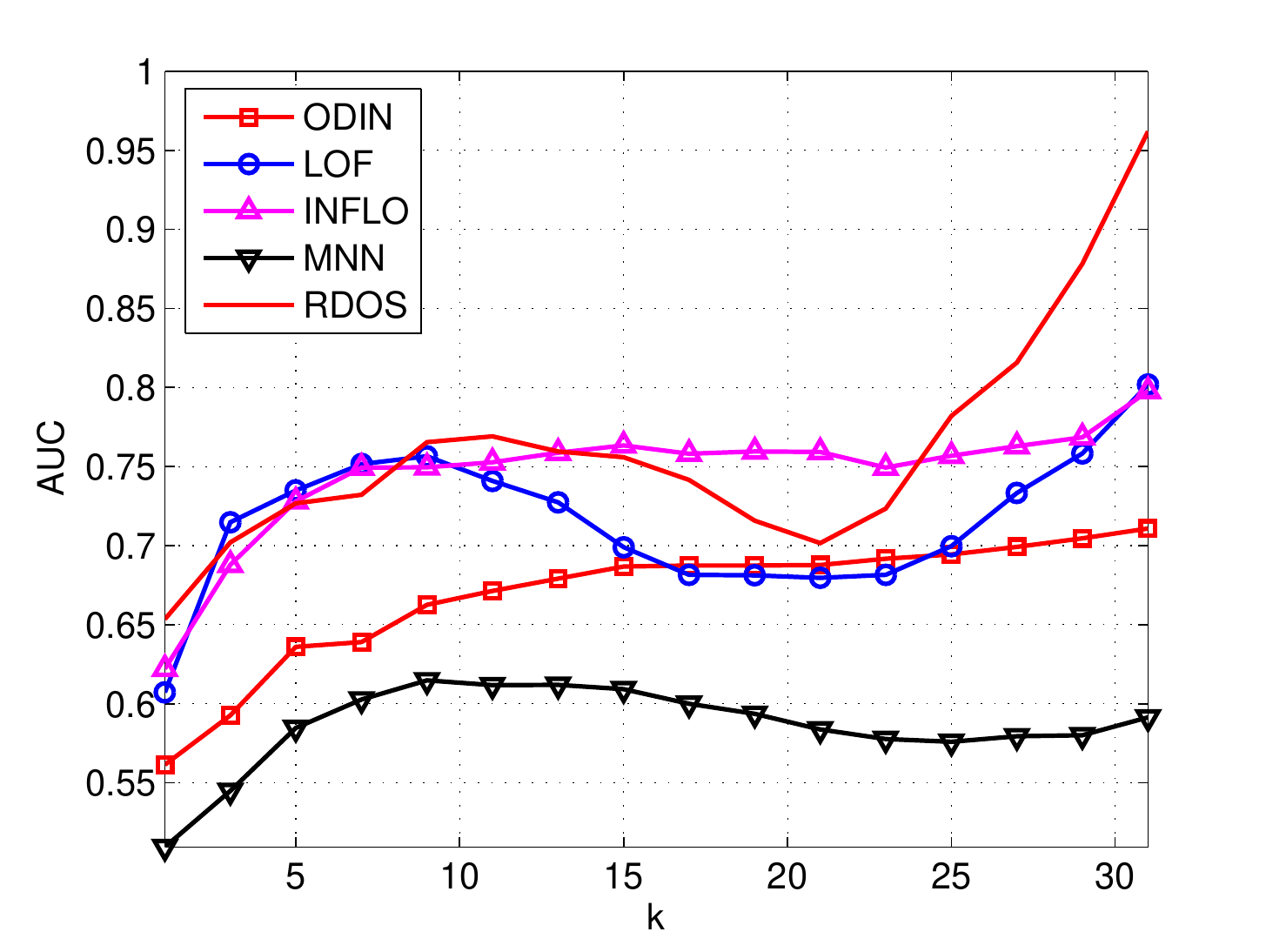}
\caption{The performance of AUC with different $k$ values for the data set of \textsc{Satellite}}
\label{satellite_h1}
\end{figure}

\begin{figure}[!ht]
\centering
\includegraphics[width= 10 cm]{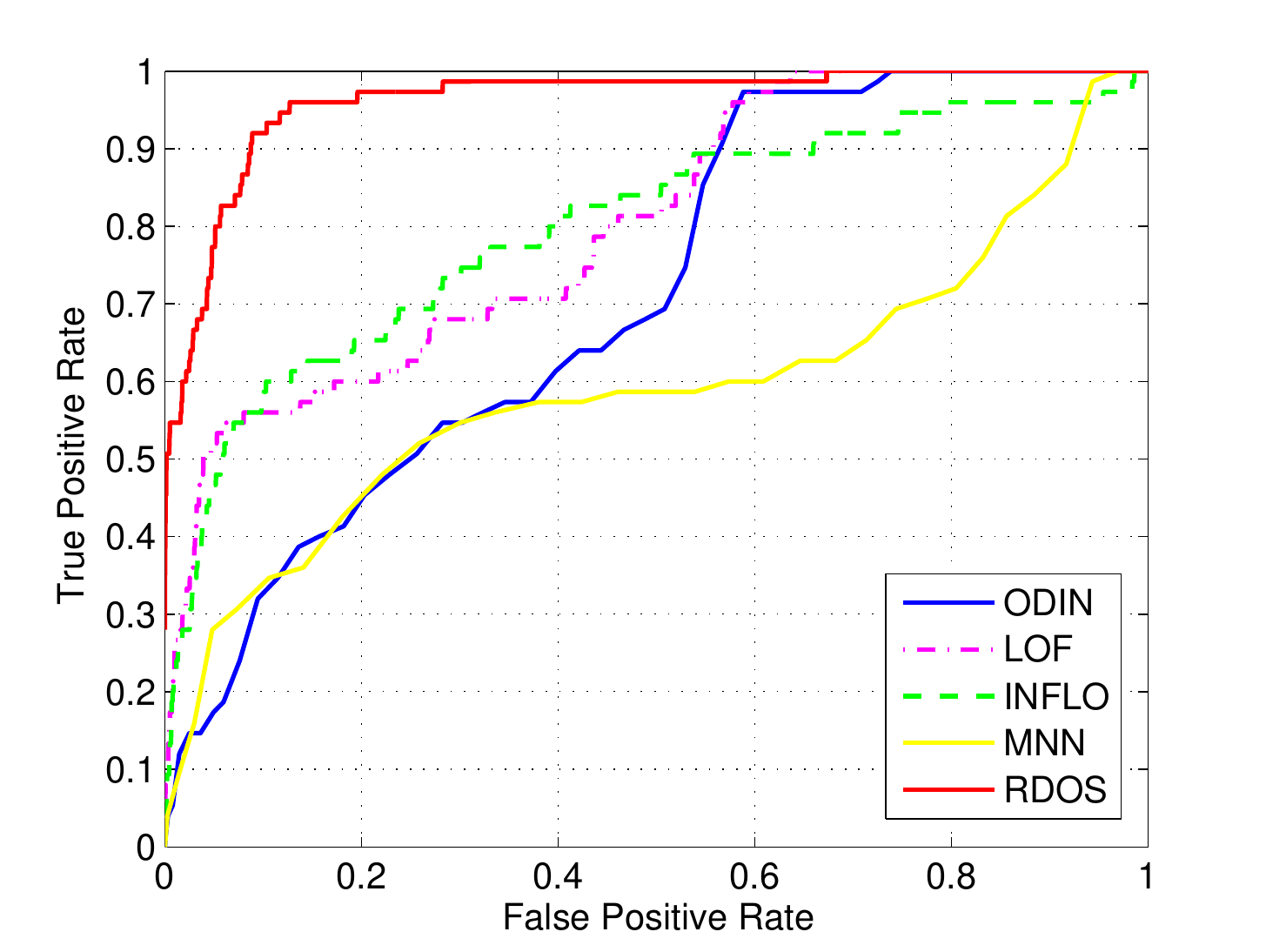}
\caption{The ROC for the data set of \textsc{Satellite}, where $k = 31$}
\label{satellite_roc}
\end{figure}

%

\section{Conclusions and Future Work}
This paper presented a novel local outlier detection method based on local kernel density estimation. Instead of only considering the $k$ nearest neighbors of a data sample, we considered three kinds of neighbors: $k$ nearest neighbors, reverse nearest neighbors, and shared nearest neighbors, for local kernel density estimation. A simple but efficient relative density calculation, termed Relative Density-based Outlier Score (RDOS), was introduced to measure the outlierness. We further derived theoretical properties of the proposed RDOS measure, including the expected value and the false alarm probability. The theoretical results suggest parameter settings for practical applications. Simulation results on both synthetic data sets and real-life data sets illustrate superior performance of our proposed method. One drawback of kernel-based density estimation is its kernel width selection. Along this research direction, new density estimation methods such as exponentially embedded families \cite{tangEEF,tangtoward,kayprob} and PDF projection theorem \cite{baggenstoss2003pdf,tangEEF2016} will be investigated in our future work. \\

\noindent \textbf{Reference:}


%
\bibliographystyle{elsarticle-num}
\bibliography{ref}  
%
%

\end{document}